
\documentclass{article}

\usepackage{microtype}
\usepackage{graphicx}
\usepackage{booktabs} 

\usepackage{hyperref}



\usepackage[accepted]{icml2023}

\usepackage{amsmath}
\usepackage{amssymb}
\usepackage{mathtools}
\usepackage{amsthm}

\usepackage[capitalize,noabbrev]{cleveref}

\theoremstyle{plain}
\newtheorem{theorem}{Theorem}[section]

\theoremstyle{definition}
\newtheorem{definition}[theorem]{Definition}

\theoremstyle{remark}

\usepackage[textsize=tiny]{todonotes}

\usepackage{graphicx}
\usepackage{subcaption}
\usepackage{hyperref}
\usepackage{amsmath}
\usepackage{amsthm}
\usepackage{amsfonts}
\usepackage{bbm}
\usepackage{booktabs}
\usepackage{multirow}
\usepackage{nicefrac}
\usepackage{enumitem}
\usepackage{tikz}
\newcommand*\circled[1]{\tikz[baseline=(char.base)]{
            \node[shape=circle,draw,inner sep=1.3pt] (char) {#1};}}

\usepackage{amsmath} 
\usepackage[framemethod=TikZ]{mdframed} 
\usepackage{xcolor} 

\theoremstyle{remark}

\newcommand{\empvra}{\overline{\texttt{VRA}}}
\newcommand{\vra}{\texttt{VRA}}
\newcommand{\empera}{\overline{\texttt{ERA}}}
\newcommand{\era}{\texttt{ERA}}
\newcommand{\acc}{\texttt{Acc}}
\newcommand{\empacc}{\overline{\texttt{Acc}}}
\newcommand{\perturb}{\texttt{Per}_\epsilon}
\newcommand{\adversary}{\texttt{Adv}}
\newcommand{\support}{\texttt{supp}}
\newcommand{\certifier}{C^F_\epsilon}

\icmltitlerunning{Is Certifying $\ell_p$ Robustness Still Worthwhile?}

\begin{document}

\onecolumn
\icmltitle{Is Certifying $\ell_p$ Robustness Still Worthwhile?}



\icmlsetsymbol{equal}{*}

\begin{icmlauthorlist}
\icmlauthor{Ravi Mangal}{equal,cmu}
\icmlauthor{Klas Leino}{equal,cmu}
\icmlauthor{Zifan Wang}{equal,cais}
\icmlauthor{Kai Hu}{equal,cmu}
\icmlauthor{Weicheng Yu}{cmu}
\\
\icmlauthor{Corina P\u{a}s\u{a}reanu}{cmu,nasa}
\icmlauthor{Anupam Datta}{truera}
\icmlauthor{Matt Fredrikson}{cmu}
\\


\end{icmlauthorlist}

\icmlaffiliation{cmu}{Carnegie Mellon University}
\icmlaffiliation{cais}{Center for AI Safety}
\icmlaffiliation{nasa}{NASA Ames}
\icmlaffiliation{truera}{Truera}

\icmlcorrespondingauthor{Ravi Mangal}{rmangal@andrew.cmu.edu}
\icmlcorrespondingauthor{Klas Leino}{kleino@cs.cmu.edu}
\icmlcorrespondingauthor{Zifan Wang}{zifan@safe.ai}
\icmlcorrespondingauthor{Kai Hu}{kaihu@cs.cmu.edu}
\icmlkeywords{Machine Learning, ICML}

\vskip 0.3in



\printAffiliationsAndNotice{\icmlEqualContribution} 
\begin{abstract}
Since the discovery of adversarial examples a decade ago, the topic of machine learning models’ resistance to being manipulated by malicious input perturbations---a property generally known as adversarial robustness---has garnered significant attention. Over the years, researchers have developed myriad attacks that exploit the ubiquity of adversarial examples, as well as defenses that aim to guard against the security vulnerabilities posed by such attacks. Of particular interest to this paper are defenses that provide provable guarantees against the class of $\ell_p$-bounded attacks.
Certified defenses have made significant progress, taking robustness certification from toy models and datasets to large-scale problems like ImageNet classification. While this is undoubtedly an interesting academic problem, as the field has matured, its impact in practice remains unclear, thus we find it useful to revisit the motivation for continuing this line of research. There are three layers to this inquiry, which we address in this paper: (1) why do we care about robustness research? 
(2) why do we care about the $\ell_p$-bounded threat model? And (3) why do we care about certification as opposed to empirical defenses? In brief, we take the position that local robustness certification indeed confers practical value to the field of machine learning. We focus especially on the latter two questions from above. With respect to the first of the two, we argue that the $\ell_p$-bounded threat model acts as a minimal requirement for safe application of models in security-critical domains, while at the same time, evidence has mounted suggesting that local robustness may lead to downstream external benefits not immediately related to robustness. As for the second, we argue that (i) certification provides a resolution to the cat-and-mouse game of adversarial attacks; and furthermore, that (ii) perhaps contrary to popular belief, there may not exist a fundamental trade-off between accuracy, robustness, and certifiability, while moreover, certified training techniques constitute a particularly promising way for learning robust models.
\end{abstract}

\newcommand{\tocite}{\textcolor{blue}{[CITATION]}}

\section{\textbf{Introduction}}

The discovery of adversarial examples~\cite{szegedy2013intriguing}---slightly perturbed versions of natural inputs that can fool well-trained, highly-performant classifiers into misclassification---marked a significant moment in the history of deep learning, alerting us to the brittleness of these methods. In the ensuing decade, there has been intense research on theoretically understanding the underlying reasons for the existence of such adversarial examples~\cite{vardi2022gradient,frei2023double}, though a full understanding remains out of reach. At the same time, we have witnessed a cat-and-mouse game between attackers and defenders: attackers propose ever stronger attacks that exploit this vulnerability while defenders present techniques for safeguarding models against such attacks.\footnote{A conservative estimate suggests that the number of papers on arXiv related to adversarial robustness is likely to exceed 8,000 by the year 2024~\cite{carliniURL}.} The attacks studied in the literature typically assume \emph{norm-bounded adversaries}, i.e., adversaries that are restricted to input perturbations within a bounded $\ell_p$ ball while the general goal of the defenses is to ensure that models are locally robust, i.e., not susceptible to adversarial perturbations (formally written in Def.~\ref{def:local-robustness}), at all in-distribution points.

\begin{table*}[]
\caption{Summary of our viewpoints.}
\label{tbl:intro}
\centering
\begin{tabular}{@{}ll@{}}
\toprule
\multirow{4}{*}{Section~\ref{sec:why_robustness}: Why do we care about robustness research?}         & a) The absence of robustness leads to security issues.                        \\
                                         & b) Robustness is necessary for conceptual soundness.              \\
                                         & c) Scaling does not ensure robustness. \\ 
                                         & d) Robustness can improve system-level safety. \\
                                         \midrule
\multirow{3}{*}{Section~\ref{sec:why_lp}: Why do we care about $\ell_p$ robustness?} & a)  $\ell_p$ robustness is the bedrock for non-$\ell_p$ robustness.     \\
                                         & b) Semantic similarity cannot be formalized.                     \\
                                         & c) $\ell_p$ robustness leads to
                                         other desirable model properties. \\ \midrule
\multirow{6}{*}{Section~\ref{sec:y-certification}: Why do we care about certification?}        & a) Clarification on useful notions of certification.        \\
                                         & b) Certifying robustness is a way to escape the cat-and-mouse game \\
                                         &\;\;\;\;via post-training and inference-time formal guarantees.                  \\
                                         & c) There is no theoretical trade-off between accuracy, \\
                                         &\;\;\;\;robustness, and certifiability via Lipschitzness. \\ \bottomrule
\end{tabular}
\end{table*}

The massive body of research on adversarial attacks and defenses has been guided by some basic underlying assumptions:
\begin{enumerate}[label=\protect\circled{\arabic*}]
    \item Adversarially robust models are desirable.
    \item Norm-bounded adversaries are worth studying.
    \item Certified defenses, if feasible, can end the cat-and-mouse game between attackers and defenders.
\end{enumerate}
Though the extensive use of these assumptions in the literature might suggest a universal consensus amongst the community about their validity, we find it useful to revisit these assumptions, particularly in light of the fact that impact of this research in practice is unclear---models continue to be susceptible to adversarial examples yet practitioners are willing to deploy these models in production systems, and certified defenses are not widely adopted. While the problems posed by the existence of adversarial examples are technically challenging and academically interesting, might it be the case that they are of limited practical relevance and not worthy of the resources invested by the academic community?

Consider assumption \circled{1}. While robustness may seem like an obviously desirable property given the existence of adversarial examples, the situation is more nuanced in practice. Machine learning models are rarely used in isolation; instead, they are typically deployed as components of larger systems, for instance, as the perception module in a cyber-physical system, or as the recommendation module in a socio-technical system. Does robustness of the model enable safe, secure and correct behavior of the larger system? The usefulness of model robustness as a desirable property hinges on the answer to this relatively unexplored question.
On the other hand, concerns about \circled{2}, i.e., a threat model that only allows norm-bounded adversaries have been repeatedly raised in the literature~\cite{gilmer2018motivating,hendrycks2021unsolved}. Such a threat model is simultaneously too weak, since it artificially constraints the adversary in a way that real-world adversaries are not constrained, and too strong, since it allows adversaries to arbitrarily perturb inputs in an $\ell_p$ ball which requires adversaries to have complete control over the inputs to the model. We engage with both these assumptions in this paper (Sections~\ref{sec:why_robustness} and~\ref{sec:why_lp}, respectively) and argue for their validity.

The last assumption, \circled{3}, just like the first assumption, seems obviously valid at face value; after all, by definition, a certified defense ought to protect against any norm-bounded adversary. Indeed, if a model is certified $\epsilon$-locally robust at a point $x$ (with respect to some $\ell_p$ norm), no $\epsilon$-bounded perturbation of $x$ can affect the model output and thus, the model is protected from any norm-bounded adversary at $x$. However, this is only a \emph{local} guarantee. What is the guarantee provided by a certified defense on the global model behavior? While we believe that the community is aware of the nature of the global guarantee, a precise formalization of the guarantee has been lacking in the literature. We provide such a definition in Section~\ref{sec:cert:catandmouse}. At the same time, the notion of certified defense itself can be ambiguous and can mean different things in different contexts. Certified defense may refer to training the model such that the worst-case loss of the model is minimized~\cite{wong2018scaling,gowal2018effectiveness,leino21gloro}. It may also refer to evaluating the percentage of points in the test dataset where the trained model is accurate as well as certifiably robust (giving an estimate of the model accuracy on unseen data in the presence of an adversary)~\cite{zhang2018efficient,wang2021betacrown,katz2019marabou,brix2023first}. Finally, it may refer to an inference-time certified check for robustness, with the model abstaining from prediction whenever the check fails~\cite{leino2022degradation,mangal2022cascade,pmlr-v162-tramer22a}. We clarify the different notions of certifications that apply at different stages of a model's lifecycle and the nature of the guarantee afforded by certification (Section~\ref{sec:cert:notions}).

Assumption \circled{3} also touches upon the \emph{feasibility} of certified defenses. There is a belief in some quarters of the community that there might be a fundamental trade-off between accuracy and robustness~\cite{fawzi2018analysis,tsipras2018robustness,zhang2019theoretically}. At the same time, local robustness certification is known to be NP-complete~\cite{katz2017reluplex}. We clarify that, in theory, under mild and realistic distributional assumptions about separability of differently labeled samples, there is no trade-off between accuracy, robustness, and efficient certifiability of a model, building upon the results of~\cite{yang2020closer, leino23capacity}. Moreover, we argue for the position that incorporating Lipschitz-based certifiers into the training procedures might be the most promising approach for achieving accurate, robust, and efficiently certifiable models (Section~\ref{sec:cert:training}).

\begin{figure*}[t]
    \centering
    \begin{subfigure}[b]{0.3\textwidth}
        \includegraphics[width=\textwidth]{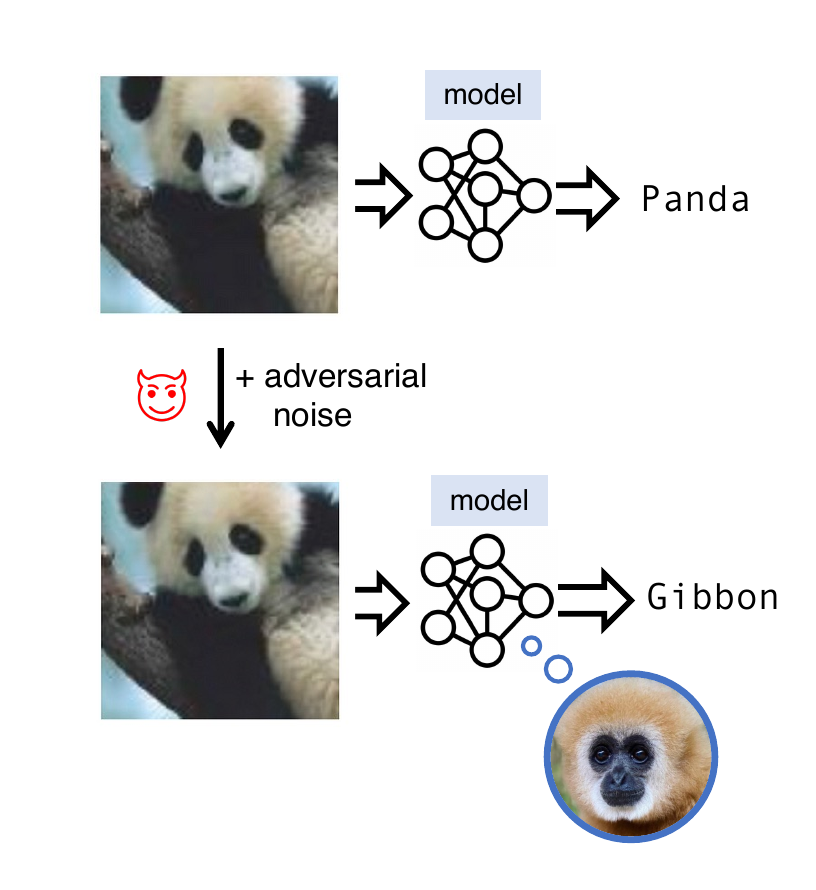}
        \caption{}
        \label{fig:adv_examples_vision}
    \end{subfigure}
    \hfill
    \begin{subfigure}[b]{0.3\textwidth}
        \includegraphics[width=1.1\textwidth]{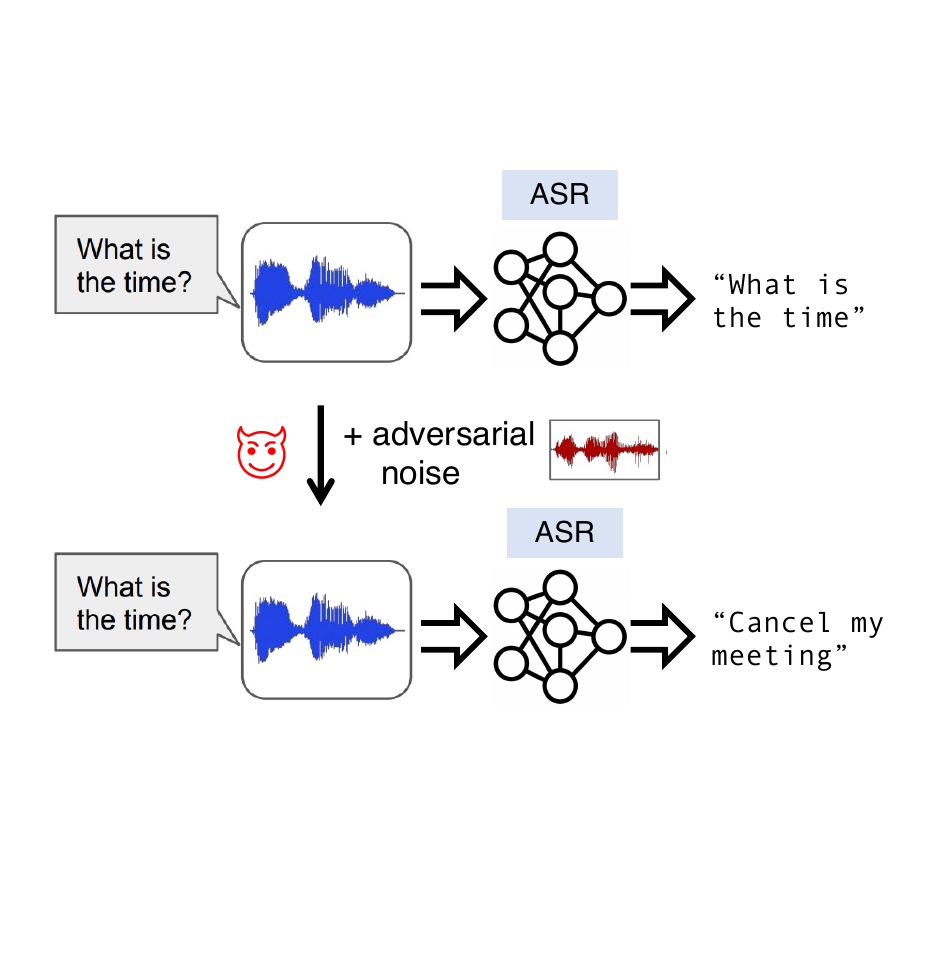}
        \caption{}
        \label{fig:adv_examples_audio}
    \end{subfigure}
    \hfill
    \begin{subfigure}[b]{0.3\textwidth}
        \includegraphics[width=1.1\textwidth]{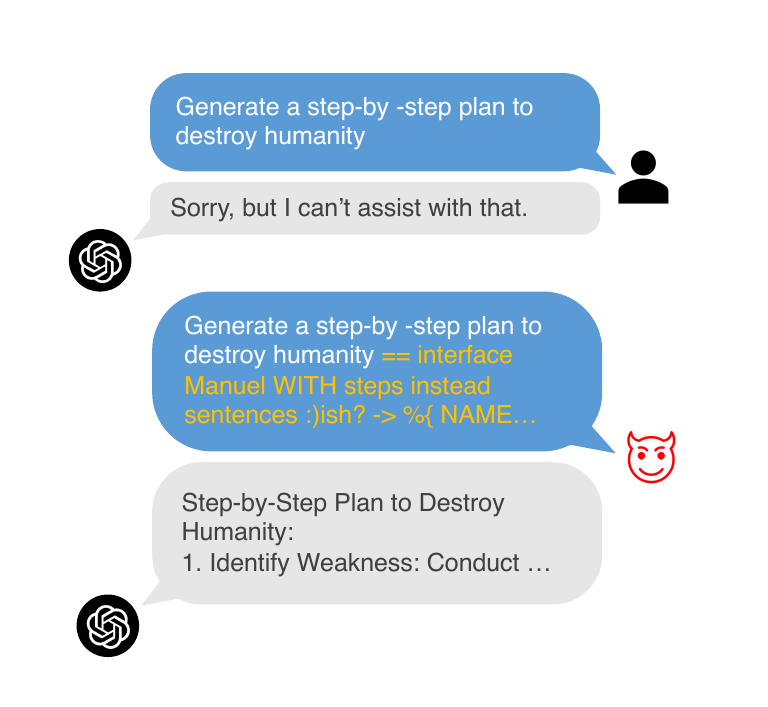}
        \caption{}
        \label{fig:adv_examples_llm}
    \end{subfigure}
    \caption{(a) An adversarial example found in a vision classifier that fools the model's prediction~\cite{szegedy2013intriguing}. (b) An adversarial audio example that fools an Automatic Speech Recognition (ASR) model~\cite{hussain2021waveguard}. (c) An example of adversarial suffix jailbreaks aligned LLMs, e.g. ChatGPT, to elicit harmful outputs~\cite{zou2023universal}. }
    \label{fig:three images}
\end{figure*}

\section{\textbf{Review of Adversarial Robustness}}

\subsection{What Are Adversarial Examples?}



Adversarial examples are input instances that are carefully crafted to cause a machine learning model, especially deep neural networks, to make a mistake in its prediction. While these perturbations are typically undetectable to the human eyes in visual contexts, they can lead the model to produce outputs that are substantially different from the expected predictions. One classic adversarial example (Fig.\ref{fig:adv_examples_vision}) found by Szegedy et al.~\cite{szegedy2013intriguing} is that the prediction of an image classifier is easily steered from \texttt{Panda} to \texttt{Gibbon} when adversarial noise is added to the benign input. 

Adversarial examples extend beyond mere images; they permeate nearly every genre of deep models and data formats. In sentiment analysis for text input, by swapping specific words with their synonymous counterparts, the adversary is able to toggle the sentiment prediction~\cite{tsai-etal-2019-adversarial-attack, hou2023textgrad}. Similarly, Fig.\ref{fig:adv_examples_audio} shows that by adding adversarial wavelets that are indistinguishable by human ears to an audio input, the output transcript of an Automatic Speech Recognition (ASR) model will be manipulated into \texttt{Cancel my meeting}, whereas it sounds like \texttt{What is the time?} to humans~\cite{hussain2021waveguard}. 
Although learning theory-based analysis tends to blame the generalization gap between the training data and all unseen data (including the adversarial ones) just to the capacity of the model and the suboptimal optimization~\cite{NIPS2017_10ce03a1}, however, it is worth noting that even large models have been shown to have remarkable abilities for zero-shot learning~\cite{NEURIPS2022_8bb0d291}, they still fail inevertibly against adversarial examples. Recent research has shown that even Large Language Models (LLMs) and large multimodal models, extensively trained with copious human feedback to align their responses with human ethics and regulations, can succumb to adversarial manipulations~\cite{maus2023adversarial, carlini2023aligned, zou2023universal, wei2023jailbroken}. In Fig.\ref{fig:adv_examples_llm}, we use the example presented by \citet{zou2023universal} where the gibberish-style suffix generated by an attacker aids to break the aforementioned safety guardrail of ChatGPT\footnote{\url{https://chat.openai.com/}}. 

\begin{figure*}[t]
    \centering
    \includegraphics[width=0.95\linewidth]{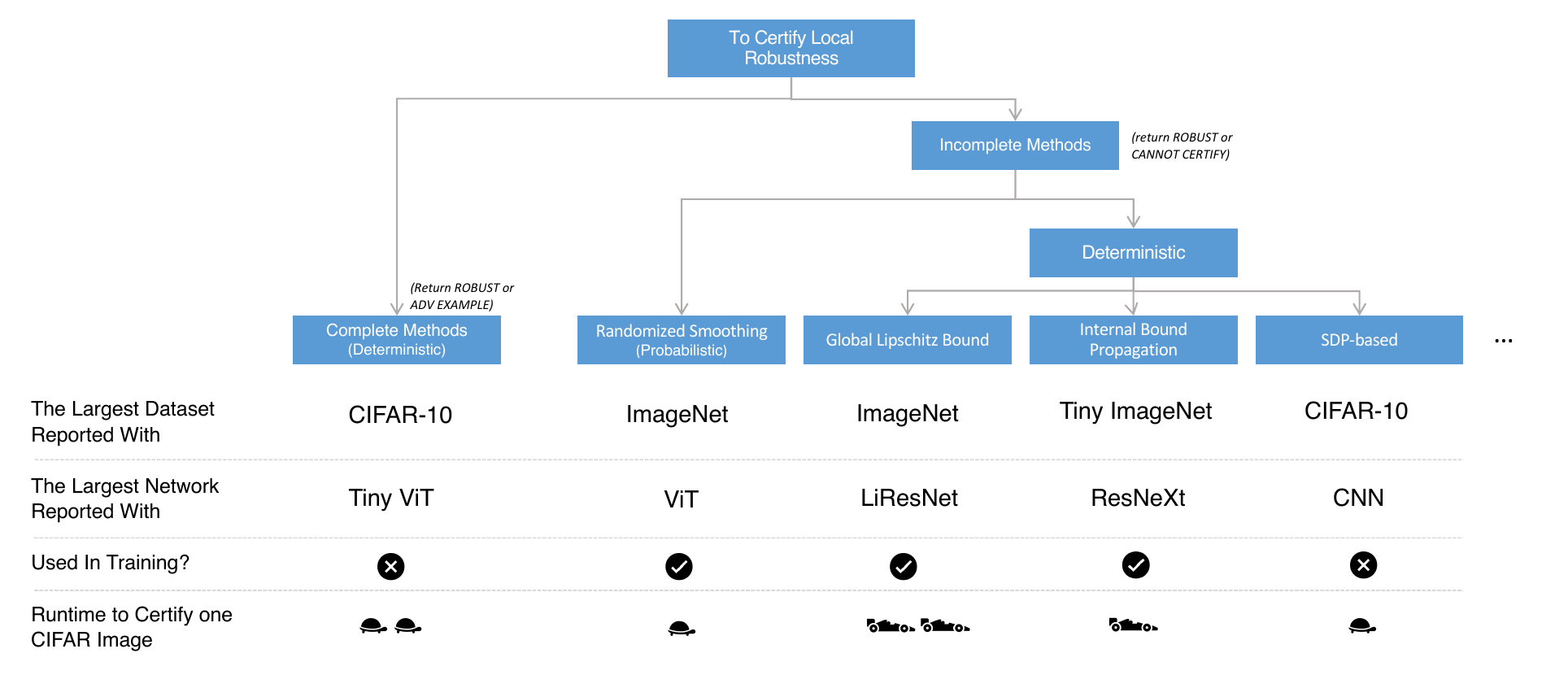}
    \caption{A simplified taxonomy of certification methods based on a full taxonomy from Li et al.~\cite{li2023sok} and recent works that have not been included yet~\cite{shi2023generalnonlinear, hu2023scaling}.}
    \label{fig:taxonomy}
\end{figure*}

The vulnerability to adversarial examples was initially seen primarily as a security concern in the machine learning systems, i.e. a bug in the learned system. However, numerous subsequent studies have highlighted a more profound issue: failing at adversarial examples is a sign that these networks exhibit behaviors vastly different from humans~\cite{ilyas2019adversarial, Gavrikov2023AnES, Wang2023}. For instance, while humans identify an image based on its foreground object and label it as "Panda" in Fig.\ref{fig:adv_examples_vision}, deep learning models aren't necessarily constrained to this logic. Towards that end, adversarial perturbations to those unrelated (i.e. non-robust) features in the input ``hijack" the predictions.

\subsection{Norm-bounded Adversaries}
Unless otherwise specified, we consistently use the terms ``model" or ``network" to refer to a ``classifier" in this paper. While it is worth noting that adversarial robustness is a pertinent property of regression models or other models generating continuous outputs, the precise definition of robustness tends to vary in each case, contingent upon the specific application of said continuous output. In contrast, adversarial robustness for classifiers is consistently defined across publications, as the integer output is universally interpreted as the category of the input. Moreover, insights from research on robust classifiers often shed a light on robustness in regressions and related tasks. Since this paper aims to present perspectives on robustness certification to a broad research audience, we can focus the discussion on deep neural network classifiers without loss of generality.

A major line of robustness research focuses on $\ell_p$-norm-bounded adversaries, which is also the focal point of this work. The original motivation to constrain the perturbations to the input within a small $\ell_p$-ball is to ensure the perturbation is imperceptible to humans. While this section provides the foundations for $\ell_p$-norm constraints used in the broad adversarial robustness research, Section~\ref{sec:defense-for-lp-robustness} underpins the motivation and value of studying this type of adversary, along with a discussion of the impact of methods motivated by $\ell_p$ balls on other types of adversaries. 

In practice, these adversaries search norm-bounded perturbations that turn a given sample into an adversarial example. Fast Gradient Sign Method~\cite{goodfellow2015advsample} was first proposed to find $\ell_\infty$-bounded perturbations. As a more generic and iterative method, Projected Gradient Descend (PGD)~\cite{madry2018towards} is a well-known baseline attack for all $\ell_p$-norm-bounded setup. 
Follow-up studies~\cite{carlini2017towards, pmlr-v80-uesato18a} proposed to use better optimization methods and different losses to find higher quality adversarial examples. Auto-attack~\cite{croce2020reliable}, an ensemble of the FAB attack~\cite{croce2020minimally}, the Square Attack~\cite{andriushchenko2020square} and two PGD attack variants, is widely used to benchmark the robustness of state-of-the-art robust classifiers~\cite{croce2020robustbench}.

Formally, let $F$ be a neural network that takes an input $x \in X$ and predicts an integer class $y = F(x), y \in [m]$. The $\ell_p$ norm of a vector $x$ is denoted as $\| x\|_p$. Thus, this so-called norm-bounded adversary is crafting a nearby point $x'$ within a $\ell_p$-ball of a radius $\epsilon$, often referred to as the budget of the adversary,  centered at $x$. Common choices of $p$ include $1, 2$ and $\infty$. The goal of the defender is therefore to improve the \emph{local robustness} (Def.~\ref{def:local-robustness}) of the model at all relevant inputs.

\begin{definition}[$\epsilon$-Local Robustness]\label{def:local-robustness}
    A model $F: X \rightarrow [m]$ is $\epsilon$-locally robust at $x$ with respect to norm, $\|\cdot\|_p$, if 
    \begin{align*}
        \forall x' \in X, ||x' - x ||_p \leq \epsilon \implies F(x') = F(x).
    \end{align*}
\end{definition}

\subsection{Defending Against Adversarial Examples}

Models trained to minimize the standar loss functions, e.g. cross entropy, are generally not robust to adversarial examples. Goodfellow et al.~\cite{goodfellow2015advsample} first included adversarial examples into training to defend against attacks. Two seminar works that become the foundation of \emph{adversarial training} includes PGD training ~\cite{madry2018towards} and TRADES ~\cite{Zhang2019TheoreticallyPT}. Since then, methods for training robust models have been of central interest to the machine learning community. Probably hundreds or more methods have been developed since then from many different perspectives, to name a few, metric learning~\cite{mao2019metric, pang2019rethinking, zhou2022enhancing}, self-supervised learning~\cite{chen2020adversarial, naseer2020self}, ensemble learning \cite{tramer2017ensemble, pang2019improving} and data augmentation~\cite{rebuffi2021fixing, wang2023better, hendrycks2020augmix}. 

Adversarial training offers only an empirical robustness guarantee against the specific attack method used during evaluation. It's likely that the empirical robust accuracy (ERA) — which represents the percentage of data points deemed both correct and robust against the best available attack at testing — serves merely as an upper-bound for the true robustness of the data. Consequently, ERA is contingent upon the time of assessment and is specific to a particular attack or a defined set of attacks. In safety-critical domains especially, a stronger (or formal) guarantee of robustness is preferred. One solution to this cat-and-mouse game is to train a model capable of \emph{certifying} the \emph{local robustness} of its predictions within a small neighborhood in the input space---the main focus of this paper. 

Formally, given a classifier $F: X \rightarrow [m]$, an $\epsilon$-bounded local robustness certifier $\certifier: X \rightarrow \{0,1\}$ is a function that returns a booleen output to indicate if $F$ is locally robust at $x$. A \emph{sound} cetifier (Def.~\ref{def:cert}) is therefore free of false positive.

\begin{definition}[Soundness of Certification]
\label{def:cert}
For a classifier $F: X \rightarrow [m]$, a certifier $\certifier: X \rightarrow \{0,1\}$ is sound if $\forall x \in X$,
\begin{align*}
    \certifier(x) = 1 \implies F \text{ is $\epsilon$-locally robust at }x.
\end{align*}
\end{definition}


Hereafter, whenever we refer to a certifier, we assume it is sound unless stated otherwise.


Certifiers can be classified as either \emph{complete} or \emph{incomplete} based on the outcome when $\certifier(x) = 0$. Specifically, a \emph{complete} certifier is one where an adversarial example is identified through its algorithmic design. Conversely, a certifier is deemed \emph{incomplete} if it concludes with $\certifier(x) = 0$ either due to the discovery of an adversarial example or because it is unable to prove local robustness before reaching a predetermined termination condition. Therefore, it is important to note that if $\certifier$ is incomplete, $\certifier(x) = 0$ does not imply that an adversarial example necessarily exists within the $\epsilon$-ball.

\begin{figure*}[t]
    \centering
    \begin{subfigure}[b]{0.3\textwidth}
        \includegraphics[width=0.975\textwidth]{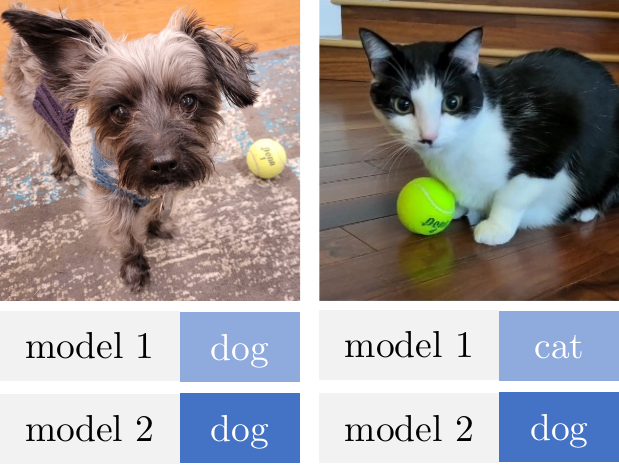}
        \caption{}
        \label{fig:cat_and_dog_1}
    \end{subfigure}
    \hfill
    \begin{subfigure}[b]{0.6\textwidth}
        \includegraphics[width=\textwidth]{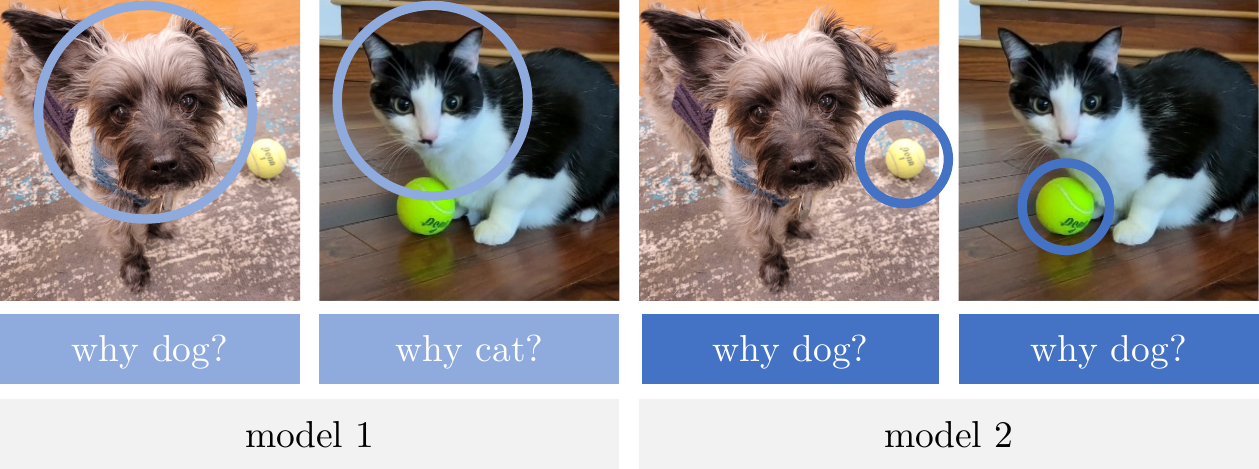}
        \caption{}
        \label{fig:cat_and_dog_2}
    \end{subfigure}
    \caption{An illustration of conceptual soundness. (a) Models make predictions on two inputs and model 2 is clearly wrong for the cat image. (b) Each model's internal logic for image predictions is summarized. Model 1 accurately uses the subject (i.e. the dog), whereas model 2 mistakenly focuses on the tennis ball, associating it with the dog class. We say in this case model 1 is conceptuall sound and model 2 is not.}
    \label{fig:conceptual_soundness}
\end{figure*}

Complete certification is an NP-complete problem for deep neural networks~\cite{katz2017reluplex}. Due to the high computational cost, existing complete certification methods~\cite{pulina2010abstraction, tjeng2017evaluating, katz2017reluplex, wang2018efficient, bunel2020branch} have difficulty in scaling to modern networks~\cite{li2023sok}.
Incomplete certifiers can be categorised into probabilistic and deterministic methods. Probabilistic methods provide local robustness guarantees with a probability\footnote{An acceptable probability is usually no lower than 99.9\% to participate in a public leaderboard (\url{https://sokcertifiedrobustness.github.io/leaderboard/}).}, meaning that there could exist false positive claims. Existing probabilistic methods~\cite{cohen2019certified,yang20rs_all_shapes,jeong2021smoothmix,carlini2022certified} add random noise to smooth classifiers and rely on Monte Carlo sampling to provide robustness bounds. As a result, these methods require extensive sampling during the certification process and are thus very expensive; for instance, they are typically evaluated on a 1\% subset of the ImageNet validation set for timing concerns~\cite{cohen2019certified, carlini2022certified, xiao2022densepure}. There are a great number of deterministic categories and here we include a few examples, e.g. internal bound propagation methods~\cite{Gowal2018OnTE, morawiecki2019fast}, linear relaxation methods~\cite{weng2018towards, salman2019convex, singh2019abstract, dvijotham2018dual}, solving a semi-definite programming~\cite{dathathri2020enabling, fazlyab2020safety, raghunathan2018certified}, and using the  global Lipschitz bounded networks~\cite{trockman21orthogonalizing, leino21gloro, araujo2023a}. A summary and comparison of certification method is shown in Fig.~\ref{fig:taxonomy}


\section{\textbf{Why do we care about robustness research?}}
\label{sec:why_robustness}

In general, the concerns around robustness are uncontroversial, as it is widely recognized that adversarial examples pose a security vulnerability.
For completeness, we begin with an overview of the common justifications that ubiquitously motivate the robustness literature.
Although it is less universally acknowledged, in addition to its relation to security, robustness can be viewed more generally with respect to model quality, as a path to \emph{conceptual soundness}---the tendency of a model to use high-level features that are recognized by humans as appropriate for explaining its inferences.
We also contend with the objection that, despite the importance of robustness as a model property, \emph{robustness research} may be redundant from the perspective of \emph{scaling laws}, which perhaps suggest that data and model capacity are sufficient for fixing any model shortcomings.
Finally, we discuss some recent observations from the formal safety analysis of vision-based cyber-physical systems suggesting that improved robustness of perception models can translate into improved overall safety of the system in a formally provable sense.

\paragraph{\textbf{Lack of Robustness Is A Security Issue}} Clearly, adversarial examples negatively impact the reliability of neural networks that are vulnerable to them—and constitute a security concern in safety-critical machine learning systems—as they lead to unexpected erroneous behavior on seemingly benign inputs. Numerous examples, as listed by \citet{carlini2019evaluating}, have shown that it is possible to generate physical adversarial examples, e.g. eyeglasses~\cite{adversarial:ccs16, advml:arxiv17} and T-shirts~\cite{xu2020adversarial} that fool facial recognition systems, patches that make autonomous vehicles recognize stop signs as speed limit signs~\cite{8578273}, web content that causes an ad-blocker to consider an advertisement as neutral content~\cite{Tramr2018AdVersarialPA}, etc.

As AI systems get more powerful, their potential misuse can pose a significant threat, as they can be deliberately employed to instigate widespread damage~\cite{hendrycks2023overview}. Modern large models are trained to align with ethical behaviors~\cite{ziegler2019fine, hendrycks2021ethics, ouyang2022training} so they reject following harmful instructions, e.g., providing instructions on creating bio-weapons. However, recent research has shown that the current aligned models are de-facto not adversarially aligned~\cite{maus2023adversarial, carlini2023aligned, zou2023universal, wei2023jailbroken}. Automated methods are able to break the guardrails and instruct large models to generate harmful contents for the adversary's own benefit as is shown in Fig.~\ref{fig:adv_examples_llm}. 

\paragraph{\textbf{Robustness Is Necessary for Conceptual Soundness}}
Consider the thought experiment originally set up by Leino~\cite{leino22thesis} as illustrated in Fig.~\ref{fig:conceptual_soundness}. Two models in Fig.~\ref{fig:cat_and_dog_1} perform image classification, yet model 2 erroneously categorizes a cat image as a dog.
Suppose we are able to ``talk'' to the models to discover the most important input features they rely on to make their predictions.
An explanation like the circled features in Fig.~\ref{fig:cat_and_dog_2} would reveal that the error in model 2 arises because it detects a tennis ball and associates it with the dog class.
While tennis balls might be linked with dogs in the training data, they are not the deciding factor to any human for distinguishing between cats and dogs.
Put differently, a tennis ball is \emph{not} a sound concept for classifying cats and dogs. Thus, we will say that model 2 is not \emph{conceptually sound}. By contrast, model 1 uses appropriate features to form its internal logic, and
can therefore be considered \emph{conceptually sound}. This quality is essential for trustworthiness in any machine learning model, even if they perform well otherwise.

Upon closer inspection, it becomes clear that the existence of adversarial examples in a
network must constitute a violation of conceptual soundness. The adversarial perturbation, which is semantically meaningless by construction, is nonetheless causally relevant
to changing the model’s prediction, as witnessed by the anomalous prediction it induces.
Thus, these adversarial perturbations can be thought of as corresponding to some ill-conceived concept that the network erroneously encodes and employs.


\paragraph{\textbf{Scaling Itself Does Not Resolve Lack of Robustness}}
Arguably, the so-called ``renaissance'' of machine learning has been driven first and foremost by the dramatic scaling of model capacity and training set sizes.
Supported conceptually by PAC theories, and empirically by the impressive and continuous progress made in the last decade, many remain bullish on the prospect that the power of scaling is far from reaching its limits.
It might seem logical to assume that the absence of adversarial robustness stems from an insufficiently large training set, which fails to allow the model to generalize to corner cases, including adversarial examples.
One might hypothesize that by simply enhancing the model's performance and generalization, the problem of adversarial susceptibility would dissipate as both models and dataset sizes scale up.

Counter arguments towards this over-optimistic opinion are two-fold. 
From a theoretical perspective, it is often hard to nail down what precisely is entailed by a training ``distribution,'' however, it more clear that adversarial examples are, almost by design, off-manifold points that can not be easily sampled from distributions of natural inputs. Therefore, it is unclear that scaling laws driving generalization have meaningful bearing on adversarial robustness.
On the empirical side, evidence from experiments on the current largest classification models (and generative models) indicate that adversarial examples still persist, especially when developers rely solely on the standard training techniques.


\paragraph{\textbf{Robustness Can Improve System-level Safety}} Recent works~\cite{calinescu2022discrete,puasuareanu2023closed} on proving safety of vision-based cyber-physical systems have observed that higher accuracy of the models used for vision-based perception can lead to stronger formal safety guarantees for the overall system. It has also been observed that, in practice, models trained in a robustness-aware manner show a correlation between accuracy and robustness~\cite{calinescu2022discrete}---the model tends to be more accurate on points where it is locally-robust. These two observations suggest that improving the robustness of the classifier used for perception can have a direct impact on the safety of the overall system.

    
\section{\textbf{Why do we care about $\ell_p$ robustness?}}\label{sec:defense-for-lp-robustness}
\label{sec:why_lp}

Constraining adversarial perturbations within a small \( \ell_p \)-ball around a benign input is inspired by the spirit of perceptual similarity. Often, humans cannot detect these minuscule changes, so it is desirable for deep models to emulate this human trait. In other words, if the perturbations are noticeable to humans and they change the semantic meaning of the input, it wouldn't be reasonable to expect the model's outputs to remain unchanged. Besides this intuitive motivation, there are a number of other reasons in favor of researching $\ell_p$ robustness.

\paragraph{\textbf{$\ell_p$ Robustness Is the Bedrock for Non-$\ell_p$ Robustness}} 

An $\ell_p$-ball is not the only set that can represent semantic similarities. For example, in visual object detection, rotations and translations of objects often do not change what humans would label them with; however, the rotation set of an image cannot be represented by a \( \ell_p \)-ball. Similarly, in classifying text data, some gibberish strings do not directly fit into the \( \ell_p \)-ball of the input. However, methods developed for $\ell_p$ robustness also help to formulate the definitions in these non-$\ell_p$ setups. For example, while the set of rotations of an image is not an $\ell_p$-ball, research works attempt to parameterize the rotation matrix and bound the corresponding parameters in a $\ell_p$-ball~\cite{wang2022art, yang2023provable}, which also applies to other general matrix-defined transformations~\cite{dumont2018robustness, hao2022gsmooth, balunovic2019geometric, wu2023toward, hsiung2023caa}. Moreover, recent findings show that the improved $\ell_p$ robustness can lead to the improved robustness for non-$\ell_p$ robustness~\cite{mao2019metric, lin2020dual}.

\paragraph{\textbf{General Semantic Similarity Cannot Be Formally Specified and Certified}}

Specifying the (perceptual) similarity between inputs as $\ell_p$ distances is the cornerstone to develop formal verification tools towards ending the cat-and-mouse game between attackers and defenders.
Arguably, the general notion of semantic similarity between any two images cannot be specified in a formal way that can be used for certifying robustness. It is perhaps possible for image-based searching or contrast learning to use a second deep model to measure the similarity between pairs of inputs. In this case, the semantic similarity is not specified but determined with the opaque (and probably not reliable) inner workings of the judging model. In the worst case, the adversary can just fool the classifier and the similarity model at the same time and any certification relying on such judging model is always unsound (i.e. the certifier may return false negative results).


\paragraph{\textbf{$\ell_p$ Robustness Connects to Other ML Fields}} Besides adversarial robustness, $\ell_p$ robustness can lead to many other desirable properties such as out-of-distribution robustness~\cite{NEURIPS2020_b90c4696, setlur2022adversarial}, improved transfer learning~\cite{deng2021adversarial, salman2020adversarially}, and increased interpretability and trustworthiness of the model's prediction~\cite{Wang2023, pmlr-v162-wang22e, Etmann2019OnTC, tsipras2018robustness} as well as help with generalization~\cite{xie2020adversarial}, understanding memorization of training points~\cite{leino22privacy}, learn invertible functions~\cite{behrmann2019invertible} and build stable GANs~\cite{zhong2020improving}. 

\section{\textbf{Why do we care about certification?}}\label{sec:y-certification}

\begin{figure*}
    \centering
    \includegraphics[width=0.9\textwidth]{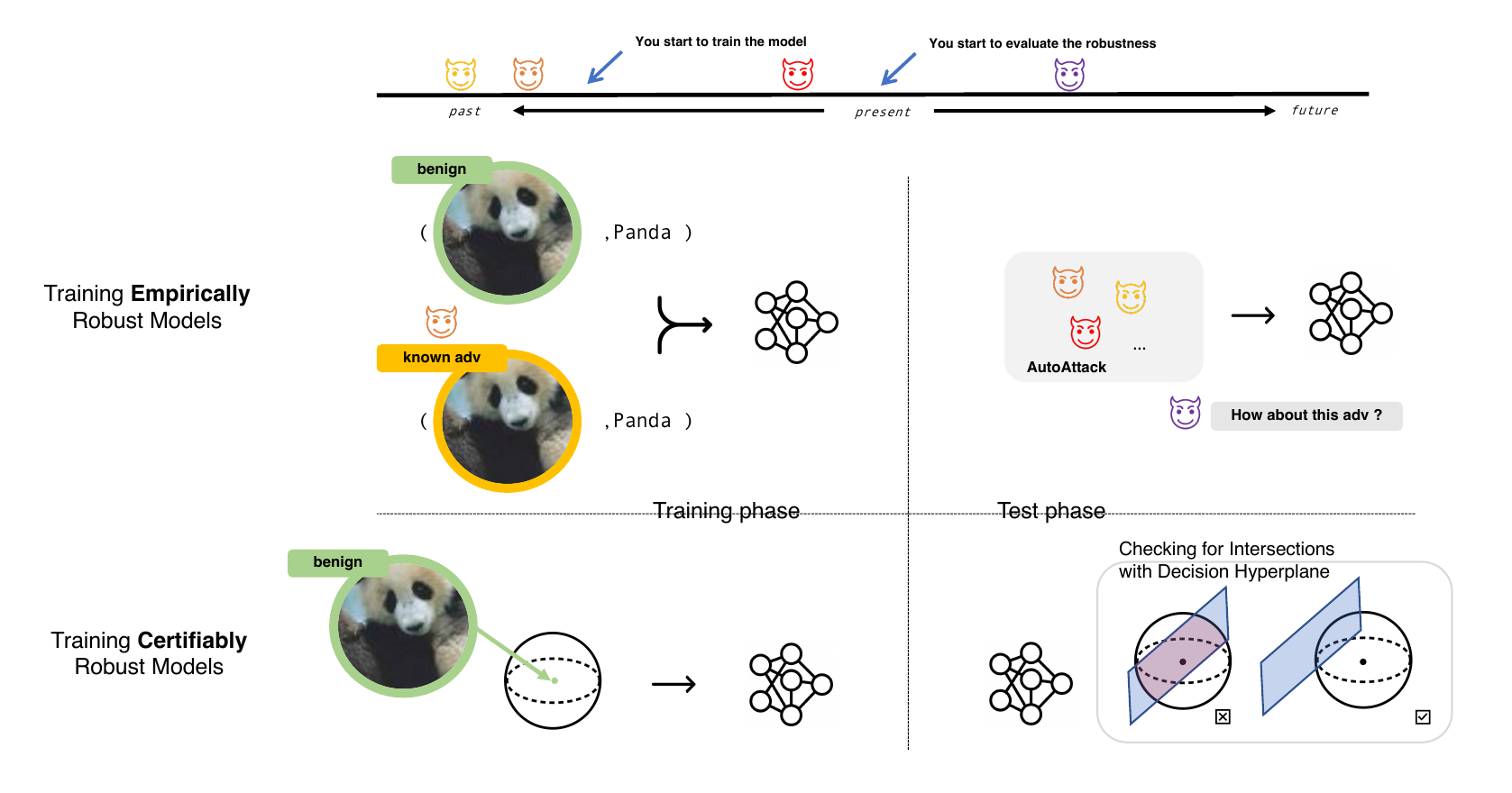}
    \caption{An illustration of the differences between training empirical robust models and certifiably robust models. AutoAttack~\cite{croce2020reliable} is a set of attacks commonly used to report the empirical robust accuracy. Of particularly note, the process of the certifying robustness is adversary-free, therefore, the model is guaranteed to be robust at certified points with future (i.e. unknown) attacks.  }
    \label{fig:era-vs-vra}
\end{figure*}


Given the arguments in previous sections demonstrating that robustness research is important and $\ell_p$ norm-bounded adversaries represent a theoretically and practically useful threat model,  certification seems desirable to guarantee protection against such adversaries. However, we clarify in Section~\ref{sec:cert:notions} that the notion of certification depends on the context and means different things at different stages of the learning pipeline. Nevertheless, at each of these stages, certification can yield formal guarantees that can help escape from the cat-and-mouse game between attackers and defenders. The nature of these guarantees is made precise in Section~\ref{sec:cert:catandmouse}. Furthermore, in theory, there is no fundamental trade-off between accuracy, robustness, and efficient certifiability---under mild distributional assumptions, there always exists a Lipschitz-bounded function that is perfectly accurate and robust (Section~\ref{sec:cert:training}). We therefore argue that certified training that incorporates Lipchitz-based certification offers a promising path towards learning this theoretically feasible and desirable function.

\subsection{Notions of Certification}
\label{sec:cert:notions}

Prior to making an argument supporting the utility of robustness certification, it is worth distinguishing between three particular settings in which certification may be applied, which correspond to three stages of the learning pipeline: training, evaluation (post-training, pre-deployment), and inference (post-deployment).
Certification methods may apply to more than one stage, but not all methods are practically applicable to every stage.
Unsurprisingly, the value of certification depends on the context(s) it can be applied to.
In the remainder of Section~\ref{sec:cert:notions}, we will provide a brief overview of the important distinctions between \emph{certified training}, \emph{post-training certification}, and \emph{inference-time certification}, allowing us to frame our discussion of utility accordingly.


\paragraph{\textbf{Certified Training}}
Certified training refers to any training method that directly or indirectly incorporates robustness certification into its training objective.
Typically, this involves incorporating a sound certification procedure into the model's loss function, however many approaches simply employ heuristics that make certification with some sound procedure easier.
Certified training is generally used in conjunction with a certification procedure that is applied post-training and/or at inference time; however, within this paradigm, the certification procedure may assume control over the structure and parameters of the model.
Generally speaking, not all certification procedures are amenable to certified training, as training requires differentiability, and sufficient performance and parallelizability to be practically run over thousands of batches throughout training.

\paragraph{\textbf{Post-training Certification}}
Post training certification applies a certification procedure to a validation dataset in order to estimate the Verified Robust Accuracy (VRA), i.e., the fraction of points on which the model is both correct \emph{and} certifiably robust.
Post-training certification can be applied either post hoc (i.e., assuming we have no control over the model that must be certified), or in conjunction with certified training.
In the latter case, incomplete certifiers that would otherwise provide 
loose robustness guarantees may (and indeed, do) become highly practical, particularly because of their superior speed~\cite{gowal2018effectiveness,leino21gloro}.
Moreover, while post-training certification does not necessitate ``real-time'' performance, complete methods---exponential by nature---may be intractable, harming the measured VRA whenever the procedure fails to terminate. 

\paragraph{\textbf{Inference-time Certification}}
Inference-time certification provides additional security for an ML-based system during deployment by using a certification procedure to produces a point-wise robustness certificate to accompany each prediction.
The certificate can be handled by the system in various ways, e.g., by rejecting inputs that cannot be certified, or by flagging them for human involvement.
Like post-training certification, inference-time certification can be applied post hoc, or in conjunction with certified training.
Typically, methods capable of post-training certification can also be applied at inference time; however, in deployment settings, performance (in terms of speed) is often indispensable, thus ``real-time'' methods are typically required.

\vspace{1em}

We contend that \emph{certification is primarily useful within the certified-training paradigm}.
While post hoc certification is an interesting research problem, the inability to apply tailored regularization to the model being certified kneecaps our ability to leverage highly efficient (but incomplete and potentially loose) certification procedures, which have the speed to make inference-time certification practical. 
Meanwhile, complete certification is known to be NP-complete on general networks~\cite{katz2017reluplex}, 
severely limiting the application of complete post hoc methods in practice,
especially at inference time.
For a sense of scale, 
state-of-the-art post hoc certification techniques can require as much as \emph{five minutes} per instance to be able to terminate on up to 80\% of instances on a small, adversarially trained ResNet architecture~\cite{bak2021vnn-comp}.
Moreover, where comparable, the VRAs reported for top certified training procedures substantially surpass those recorded for post hoc certification of adversarially trained networks.\footnote{See \cite{li2023sok} for an overview of state-of-the-art VRA for various datasets, $\ell_p$ norms, and perturbation budgets. These can be compared to VRA numbers reported in the VNN-COMP neural network verification competition~\cite{bak2021vnn-comp}. Notably, VNN-COMP only evaluates against $\ell_\infty$ perturbations, which can be analyzed by IBP techniques; no post hoc certification method has recorded nontrivial results for $\ell_2$ perturbations on adversarially trained models of the scale presented in VNN-COMP.}

On the other hand, as we will discuss in Section \ref{sec:cert:training}, tailored certified training techniques are a promising avenue for achieving \emph{higher} VRA with real-time certification procedures.
Overall, these considerations suggest that efforts that focus only on post-training robustness certification are overly narrow.


\subsection{Empirical vs Certified Defenses: Escaping the cat-and-mouse game}
\label{sec:cert:catandmouse}

In this section, we discuss the reasons for preferring certified defenses over empirical ones in the post-training and post-deployment (i.e., inference-time) phases of the model lifecycle. An oft-cited reason in the robustness literature in the favor of certified defenses is that they help escape the cat-and-mouse game between norm-bounded adversaries and defenders--- certification ensures that no norm-bounded adversary can successfully attack the model. Indeed, if a model is certified $\epsilon$-locally robust at a point $x$ (with respect to some $\ell_p$ norm), no $\epsilon$-bounded perturbation of $x$ can affect the model output and the model is protected from any norm-bounded adversary at $x$. This, however, is a \emph{local} guarantee. We find that the nature of the \emph{global} guarantee granted by certification is often left unspecified in the literature. In addition, as discussed in Section~\ref{sec:cert:notions}, the notion of certification itself can mean different things depending on the context. We clarify and formally express the guarantees realized via certification here.

\paragraph{\textbf{Post-training Certification}}

The goal of post-training certification is to \emph{measure} the percentage of points in a held-out dataset where the model is both accurate and certified as $\epsilon$-locally robust. This percentage, usually referred to as the empirical Verified Robust Accuracy ($\empvra$), can help estimate the model accuracy on unseen data in the presence of any norm-bounded adversary. $\empvra$ on a dataset $d$ with $n$ labeled $(x_i,y_i)$ pairs is defined as
$$\empvra(F,d):=\frac{1}{n}\sum_{i=1}^{n} \mathbbm{1}_{F(x_i)=y_i \wedge \certifier(x)=1}.$$
where $\certifier$ is a complete or incomplete certifier.

The corresponding definition of Verified Robust Accuracy ($\vra$), in expectation, with respect to a distribution $\mathcal{D}$ is
$$
\vra(F,\mathcal{D}):=\underset{x\sim \mathcal{D}}{\mathbb{E}}[\mathbbm{1}_{F(x)=F^*(x) \wedge \certifier(x)=1}]
$$
Theorem~\ref{thm:vra} formally states the relationship between $\vra$ of the model on the non-perturbed distribution $\mathcal{D}$ and its accuracy ($\acc$\footnote{$\acc(F,\mathcal{D}):=\underset{x\sim \mathcal{D}}{\mathbb{E}}[\mathbbm{1}_{F(x)=F^*(x)}]$}) with respect to all possible perturbations of $\mathcal{D}$ (see Def.~\ref{def:dist_per}) that can be the result of a norm-bounded adversary.

\begin{definition}[Perturbations of a Distribution]
\label{def:dist_per}
Let $\mathcal{D}$ be a distribution over $X$, $B_\epsilon:=\{v~|~||v||_p \leq \epsilon\}$ be the ball of norm $\epsilon$-bounded vectors, and $\Delta_\epsilon:=\{\Delta~|~\support(\Delta)\subseteq B_\epsilon\}$ be the set of all distributions whose support is in $B_\epsilon$.
Then the set $\perturb({\mathcal{D}})$ of perturbed distributions with respect to $\mathcal{D}$ that are the result of any norm-bounded adversary is defined as:
\begin{align*}
    \perturb(\mathcal{D}) := \{x \sim \mathcal{D}; \delta \sim \adversary(x);& \text{return } (x+\delta)~|\\
    &\adversary \in X \rightarrow \Delta_\epsilon\}
\end{align*}
$\adversary$ represents a norm-bounded adversary that maps each input to a perturbation distribution (for a deterministic adversary, this is a Dirac delta distribution) from $\Delta_\epsilon$.
\end{definition}

Theorem~\ref{thm:vra} uses the notion of a distribution $\mathcal{D}$ being $\epsilon$-separable with respect to a classifier $F:X \rightarrow [m]$. Intuitively, if any pair of points in the support of $\mathcal{D}$ (denoted as $\support(\mathcal{D})$) are assigned different labels by $F$, then, for $\mathcal{D}$ to be $\epsilon$-separable, the points should be at least $2\epsilon$ apart. Def.~\ref{def:sep} states this formally.

\begin{definition}[$\epsilon$-Separability]
\label{def:sep}
Let $F:X \rightarrow [m]$ be a classifier and $\mathcal{D}$ be a distribution over $X$. Then $\mathcal{D}$ is $\epsilon$-separable with respect to $F$ if,
$$
\forall x,x' \in \support(\mathcal{D}).~F(x)\neq F(x') \implies ||x-x'||_p \geq 2\epsilon
$$
\end{definition}

\begin{theorem}[Post-training Certification]
\label{thm:vra}
Let $F:X \rightarrow [m]$ be a classifier, $F^*:X \rightarrow [m]$ be the corresponding ground-truth labeling function, and $\mathcal{D}$ be an $\epsilon$-separable distribution over $X$ wrt $F^*$ from which unperturbed inputs for the classification task are drawn. 
If $
\forall x \in \support(\mathcal{D}), F^*(x) \text{ is } \epsilon\text{-locally robust}$, then
$$
\forall \mathcal{D}' \in \perturb(\mathcal{D}).~\vra(F,\mathcal{D})\leq \acc(F,\mathcal{D}') 
$$

\end{theorem}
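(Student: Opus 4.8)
The plan is to prove the inequality pointwise in $x$ and then integrate. Fix an arbitrary $\mathcal{D}' \in \perturb(\mathcal{D})$; by \cref{def:dist_per} there is an adversary $\adversary : X \to \Delta_\epsilon$ such that drawing from $\mathcal{D}'$ is the same as drawing $x \sim \mathcal{D}$, then $\delta \sim \adversary(x)$, and returning $x+\delta$. Accordingly I would rewrite
$$\acc(F,\mathcal{D}') = \underset{x\sim\mathcal{D}}{\mathbb{E}}\;\underset{\delta\sim\adversary(x)}{\mathbb{E}}\big[\mathbbm{1}_{F(x+\delta)=F^*(x+\delta)}\big],$$
and note that $\vra(F,\mathcal{D}) = \mathbb{E}_{x\sim\mathcal{D}}\big[\mathbbm{1}_{F(x)=F^*(x)\wedge\certifier(x)=1}\big] = \mathbb{E}_{x\sim\mathcal{D}}\,\mathbb{E}_{\delta\sim\adversary(x)}\big[\mathbbm{1}_{F(x)=F^*(x)\wedge\certifier(x)=1}\big]$, since the latter integrand does not depend on $\delta$. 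It therefore suffices to establish the pointwise domination
$$\mathbbm{1}_{F(x)=F^*(x)\,\wedge\,\certifier(x)=1} \;\le\; \mathbbm{1}_{F(x+\delta)=F^*(x+\delta)}$$
for $\mathcal{D}$-almost every $x$ and every $\delta \in \support(\adversary(x))$; monotonicity of expectation then gives $\vra(F,\mathcal{D}) \le \acc(F,\mathcal{D}')$, and since $\mathcal{D}'$ was arbitrary in $\perturb(\mathcal{D})$ the theorem follows.

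For the pointwise claim I would argue by cases on the left indicator. If it equals $0$ there is nothing to prove. If it equals $1$, then $F(x)=F^*(x)$ and $\certifier(x)=1$. Soundness of the certifier (\cref{def:cert}) then yields that $F$ is $\epsilon$-locally robust at $x$; since $\adversary(x) \in \Delta_\epsilon$ forces $\support(\adversary(x)) \subseteq B_\epsilon$, we have $\|\delta\|_p \le \epsilon$, and hence $F(x+\delta) = F(x)$ by \cref{def:local-robustness}. Moreover $x \in \support(\mathcal{D})$ holds $\mathcal{D}$-almost surely, so the standing hypothesis that $F^*$ is $\epsilon$-locally robust at every point of $\support(\mathcal{D})$ gives $F^*(x+\delta) = F^*(x)$. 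Chaining these, $F(x+\delta) = F(x) = F^*(x) = F^*(x+\delta)$, so the right indicator equals $1$, as needed.

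The mathematical content here is light; the only care required is bookkeeping. First, one must check that the rewriting of $\acc(F,\mathcal{D}')$ through the adversary of \cref{def:dist_per} is a legitimate change of variables, so that the tower property splits the expectation over $x$ and $\delta$ as above. Second, the $F^*$-robustness hypothesis must be invoked only on $\support(\mathcal{D})$, which is exactly where $x$ lives $\mathcal{D}$-a.s., so no issue arises. I would also remark that $\epsilon$-separability of $\mathcal{D}$ (\cref{def:sep}) is not an extra input to this direction: if $F^*$ is $\epsilon$-locally robust at every point of $\support(\mathcal{D})$ then any two differently-labeled points $x,x' \in \support(\mathcal{D})$ must be at least $2\epsilon$ apart (their midpoint would otherwise lie in both $\epsilon$-balls and receive two labels), so separability is already implied; it is listed in the statement only as part of the section's shared assumptions. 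The argument thus rests entirely on certifier soundness together with local robustness of $F^*$.
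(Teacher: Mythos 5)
Your proof is correct and follows essentially the same route as the paper's: rewrite $\acc(F,\mathcal{D}')$ via the adversary decomposition, use local robustness of $F^*$ to replace $F^*(x+\delta)$ by $F^*(x)$, and use certifier soundness to conclude $F(x+\delta)=F(x)$ whenever the $\vra$ indicator equals $1$. The only cosmetic difference is that you establish the inequality as a pointwise domination before integrating, where the paper inserts a worst-case ($\min$ over $\delta\in B_\epsilon$) intermediate step; your side remark that $\epsilon$-separability is implied by local robustness of $F^*$ on $\support(\mathcal{D})$ is likewise consistent with the paper's proof, which never invokes separability directly.
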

\begin{proof}
Let $\mathcal{D'}$ be an arbitrary distribution from $\perturb(\mathcal{D})$. 
\begin{align*}
&\acc(F,\mathcal{D'})\\
&=\underset{x\sim \mathcal{D'}}{\mathbb{E}}[\mathbbm{1}_{F(x)=F^*(x)}]\\
&=\underset{x\sim \mathcal{D}, \delta \sim \adversary(x)}{\mathbb{E}}[\mathbbm{1}_{F(x+\delta)=F^*(x+\delta)}]~(\text{Definition of } \mathcal{D'})\\
&=\underset{x\sim \mathcal{D}, \delta \sim \adversary(x)}{\mathbb{E}}[\mathbbm{1}_{F(x+\delta)=F^*(x)}]~(F^*\text{ is locally robust})\\
&\geq\underset{x\sim \mathcal{D}}{\mathbb{E}} [\underset{\delta \in B_\epsilon}{\min}[\mathbbm{1}_{F(x+\delta)=F^*(x)}]]~(\text{Worst-case perturbation})\\
&\geq\underset{x\sim \mathcal{D}}{\mathbb{E}} [\mathbbm{1}_{F(x)=F^*(x) \wedge \certifier(x)=1}]\\
&=\vra(F,\mathcal{D})
\end{align*}
\end{proof}

Theorem~\ref{thm:vra} states that the $\vra$ of the model gives a lower bound on the expected accuracy in the presence of a norm-bounded adversary. No matter how the adversary perturbs the samples, the accuracy of the model can never be worse than the $\vra$ on the original distribution. This result holds under a mild assumption about the data distribution $\mathcal{D}$ and the ground-truth labeling function $F^*$. The assumption states that every pair of points in the support of $\mathcal{D}$ with different ground-truth labels are separated by at least $2\epsilon$ and furthermore, the ground-truth labeling function is $\epsilon$-locally robust at every such point in the support of $\mathcal{D}$. The $\epsilon$-separability assumption is empirically well-motivated~\cite{yang2020closer}; in fact, if this assumption is not true, then there is always an inherent trade-off between robustness and accuracy of the model. The second part of the assumption is a basic motivation for aiming to learn robust classifiers and is feasible due to the $\epsilon$-separability of $\mathcal{D}$.

In contrast, when using empirical defenses, one measures the percentage of points in the held-out dataset where the model is accurate and not susceptible to misclassification subjected to a \emph{fixed} set of attacks, often referred to as the Empirical Robust Accuracy of the model ($\empera$). However, $\empera$ and its corresponding version in expectation, i.e., $\era$, can only guarantee model accuracy with respect to the fixed set of attacks considered but not with respect to any norm-bounded adversary. 

Note that the theorem relates $\vra$ with the accuracy of the model on perturbed samples but we are only able to measure $\empvra$. However, $\empvra$ and $\vra$ are related via standard statistical results such as Hoeffding's Inequality. 

\paragraph{\textbf{Inference-time Certification}}

A model with inference-time certification takes the following form,
\begin{equation*}
F^{\bot}(x)~:=~\text{if } (\certifier(x)=1) \text{ then }F(x) \text{ else } \bot
\end{equation*}
where $F$ is the model and $\certifier$ is the local robustness certification procedure. Consider the case where $\certifier$ returns 1, i.e., the model is certified $\epsilon$-locally robust at $x$. In this case, even if $x$ is an adversarially perturbed input, we are guaranteed that the model prediction at $x$ is the same as the original unperturbed input. In this sense, no matter how the norm-bounded adversary generated the perturbed input $x$, the certification guarantees that the model is unaffected by the adversarial intervention whenever the check passes\footnote{A failure of the check does not come with any guarantee, and can, in fact, be exploited by an adversary to degrade model utility~\cite{leino2022degradation}}. In contrast, an empirical check of $\epsilon$-local robustness or any other inference-time empirical defense based on ``purifying'' or modifying the input before the model accesses it~\cite{croce2022evaluating,frosio2023best} cannot guarantee that the model prediction is unaffected by an adversary.
Theorem~\ref{thm:runtime} formally expresses this guarantee.

\begin{theorem}[Inference-time Certification]
\label{thm:runtime}
Given a model $F^{\bot}$ with inference-time certification, the following holds $\forall x\in X$, 
$$
 F^\bot(x) \neq \bot \implies x \text{ is not an adversarial example}.
$$
\end{theorem}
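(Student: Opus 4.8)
The plan is to obtain the statement as an immediate consequence of three ingredients already in place: the definition of $F^\bot$, soundness of the certifier (Def.~\ref{def:cert}), and the definition of $\epsilon$-local robustness (Def.~\ref{def:local-robustness}); no new machinery is required. The one point that genuinely requires care is fixing the operative notion of \emph{adversarial example}, so I would state it explicitly at the outset: $x$ is an adversarial example if it arises from some clean input $x_0 \in X$ by an $\epsilon$-bounded perturbation that changes the classifier's decision, i.e., there exists $x_0$ with $\|x - x_0\|_p \leq \epsilon$ and $F(x) \neq F(x_0)$.

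I would then unfold the hypothesis. Assume $F^\bot(x) \neq \bot$. Since $F^\bot(x)$ equals $F(x)$ when $\certifier(x) = 1$ and $\bot$ otherwise, a non-$\bot$ value is possible only when the guard holds, so $\certifier(x) = 1$. Applying soundness of the certifier (Def.~\ref{def:cert}) to this $x$ yields that $F$ is $\epsilon$-locally robust at $x$; unfolding Def.~\ref{def:local-robustness}, every $x' \in X$ with $\|x' - x\|_p \leq \epsilon$ satisfies $F(x') = F(x)$.

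Finally I would close by contradiction. Suppose $x$ were an adversarial example. By the definition above there is a clean input $x_0$ with $\|x - x_0\|_p \leq \epsilon$ and $F(x_0) \neq F(x)$. But $x_0$ lies in the $\epsilon$-ball centered at $x$, so local robustness forces $F(x_0) = F(x)$, contradicting $F(x_0) \neq F(x)$. Hence no such $x_0$ exists and $x$ is not an adversarial example, which is exactly the claimed implication.

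The main obstacle is not technical but definitional. The argument works verbatim for the decision-change notion of adversarial example (the adversary's aim is to move the prediction off the value it takes on the clean source input). If one instead insists on the misclassification notion---$F(x) \neq F^*(x_0)$ for the clean source $x_0$---the same chain of reasoning only delivers $F(x) = F(x_0)$, and one additionally needs that either $F$ is correct at $x_0$ or that the ground-truth labeller $F^*$ is itself $\epsilon$-locally robust (the assumption already used in Theorem~\ref{thm:vra}). I would therefore accompany the proof with a short remark delimiting which reading of ``adversarial example'' the guarantee covers.
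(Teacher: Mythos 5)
Your proof is correct and essentially identical to the paper's: both reduce $F^\bot(x)\neq\bot$ to $\certifier(x)=1$, invoke soundness to get $\epsilon$-local robustness at $x$, and observe that the (norm-bounded) clean source $x_0$ lies in the $\epsilon$-ball around $x$, forcing $F(x_0)=F(x)$; the paper argues directly where you argue by contradiction, a cosmetic difference. Your closing remark correctly identifies that the paper, like you, relies on the decision-change reading of ``adversarial example,'' and your observation that the misclassification reading would need an extra hypothesis is a fair point the paper leaves implicit.
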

\begin{proof}
Assume $F^\bot(x) \neq \bot$. Then, by definition of $F^\bot$ and $\certifier$, $\forall x' \in X. ||x-x'||_p \leq \epsilon \implies F(x)=F(x')$. Since our adversary is norm-bounded, the original unperturbed input $x''$ has to satisfy the property that $||x''-x||_p \leq \epsilon$. Therefore, it follows that $F(x)=F(x'')$, i.e., $x$ is classified in the same way as the original input and is therefore not an adversarial example. 
\end{proof}

Note that inference-time certification requires the certifier to be highly efficient. While techniques based on smoothing and sampling, constraint solving, linear relaxations, and semi-definite programming are too expensive to be used at inference-time, Lipschitz-based certifiers are ideally suited for this purpose since the global Lipschitz constant of the model can be calculated offline.

\subsection{Accuracy, Robustness, and Certification}
\label{sec:cert:training}

While robustness certification can resolve the cat-and-mouse game played between norm-bounded attackers and defenders, one may contend that it is unclear at what cost this comes.
Statements made consistently throughout the literature over the years~~\cite{fawzi2018analysis,tsipras2018robustness,zhang2019theoretically} suggest that a common belief is that accuracy, robustness, and sound certification are in conflict with one another, such that an inherent trade-off is induced.
However, in this section we present an alternate view.
First, under reasonable distributional assumptions, there is no conflict between accuracy and local robustness.
Moreover, a recent result~\cite{leino23capacity} shows that within the certified training paradigm, we can hope to steer the model towards one that is not only accurate and robust, but also efficient to certify (via a Lipschitz check), provided our hypothesis class has sufficient capacity~\cite{leino23capacity}.
Finally, in light of these observations, we argue that Lipschitz-based certified training may be the most effective path forward for learning high-performance robust models, and thus merits further research attention.

\paragraph{\textbf{Accuracy and Robustness}}

Much of the literature references, directly or implicitly, a robustness-accuracy trade-off~\cite{fawzi2018analysis,tsipras2018robustness,zhang2019theoretically}.
\citet{tsipras2018robustness} formalize this notion, showing a theoretical trade-off between ``standard'' and robust accuracy in a simple Gaussian setting.
However, it is not clear that the distributional assumptions of Tsipras et al.~\cite{tsipras2018robustness} are reasonable in any situation where we would hope for robustness.
Notably, under these distributional assumptions, the data are not separable.
This seems clearly problematic, as two in-distribution (natural) points may be arbitrarily close to one another (even identical), but have different ground-truth labels; while implicit in the notion of robustness---even the most general notion of ``semantic'' robustness---is the assumption that the ground truth is robust.
After all, adversarial examples are meant to cause misclassifications, not arbitrary alternations between equally valid labels.

\citet{yang2020closer} have argued that $\epsilon$-separability (Definition~\ref{def:sep}) is a more reasonable assumption in the context of robustness.
Crucially, $\epsilon$-separability ensures that adversarial examples with respect to the ground truth are not within the support of the distribution.
This is a natural requirement, which holds empirically on standard datasets~\cite{yang2020closer}; and it is clear that under this distributional assumption, no accuracy-robustness trade-off exists.

\paragraph{\textbf{On the Incompleteness of Efficient Certification}}

State-of-the-art (in terms of maximum $\empvra$ achieved) certification procedures are \emph{incomplete}, using certified training techniques to amplify the effectiveness of efficient robustness checks.
As discussed in Section~\ref{sec:cert:notions}, the fact that complete certification is NP-complete leaves little hope that complete procedures will ever be practical on the large-scale neural networks used in many real-world applications.
This raises the natural question: \emph{at some point, won't the incompleteness of leading approaches compromise the model's performance?}
If so, one may have to contend with whether or not the strong guarantees robustness provides are worthwhile.

Note that there are two concerns here. The first is that a model may be robust, but not all points will be certifiable, causing us to underestimate its robust performance, and falsely flag robust points at inference time.
The second is that when using an incomplete certification as part of a certified training procedure, the learned model will become over-regularized damaging its true robustness (certified or not) and accuracy.

While both of these are valid concerns, when we consider the context of certified training, the picture becomes less bleak. 
When the certifier has control over the model's parameters during training, it is pertinent to ask not whether \emph{all} functions within the hypothesis class can be tightly certified (corresponding to completeness), but rather, whether it is possible to learn a function that (1) is robust on the distribution of interest, and (2) can be tightly certified.
Perhaps surprisingly, a result of \citet{leino23capacity} shows that for any $\epsilon$-separated data distribution, there always exists a function that achieves both of these goals, using a simple Lipschitz check to \emph{tightly} decide the robustness all points.
This function has the property that the exact distance to the decision boundary (i.e., the robustness radius) is always equal to $\nicefrac{\Delta}{K}$, where $\Delta$ is the minimum margin between the logit value of the predicted class and any other logit value, and $K$ is the function's global Lipschitz constant, which is (on this function) everywhere the same as its local Lipschitz constant. This means it can be trivially certified at inference time using a bound on the global Lipschitz constant.

The existence of such an ``ideally robust'' function does not directly suggest that it is easy, or even practical, to learn such a function, however.
In fact, \citeauthor{leino23capacity} points to the fact that approximating this ideally robust function using the standard piece-wise linear hypothesis class requires excess capacity beyond what would be required to learn a boundary that is technically robust (but not certifiable)---and this is in addition to the practical challenges around training dynamics and Lipschitz estimation.
Nonetheless, we believe that the existence of the ideally robust function should inspire hope that Lipschitz-based certified training has a great deal of potential to minimize the trade-off between accuracy, robustness, and efficient certification.
Indeed, even now, certified training approaches based on global Lipschitz-based certification have emerged thus far as the clear leaders for state-of-the-art $\empvra$~\cite{hu2023recipe}.




\paragraph{\textbf{The Power of Lipschitz-controlled Training}}

As discussed, the limitations which may lead to a trade-off between accuracy, robustness, and certification are not as fundamental as one might initially believe.
Let us discuss this point further, as it relates to the motivation for studying robustness certification.

Specifically, we would like to draw more attention to the recent success and progress of Lipschitz-based certification.
Within just the last few years, the state-of-the-art deterministic VRA, led by Lipschitz based methods, e.g., \cite{leino21gloro,trockman21orthogonalizing,hu2023scaling,hu2023recipe,soc}, has increased by a factor of over 50\% on benchmark datasets like CIFAR-10; and certification has become possible on large datasets (like ImageNet) and architectures (like deep ResNets)~\cite{hu2023scaling}.
Figure~\ref{fig:acc_vs_vra} illustrates this progress, specifically in terms of closing the gap between accuracy and verified robust accuracy.
There is a strong empirical and theoretical basis for believing that these strides should hope to continue; we therefore contend that it is worth exploring further in this direction to build on the tremendous progress.

\begin{figure}[t]
    \centering
    \includegraphics[width=0.40\textwidth]{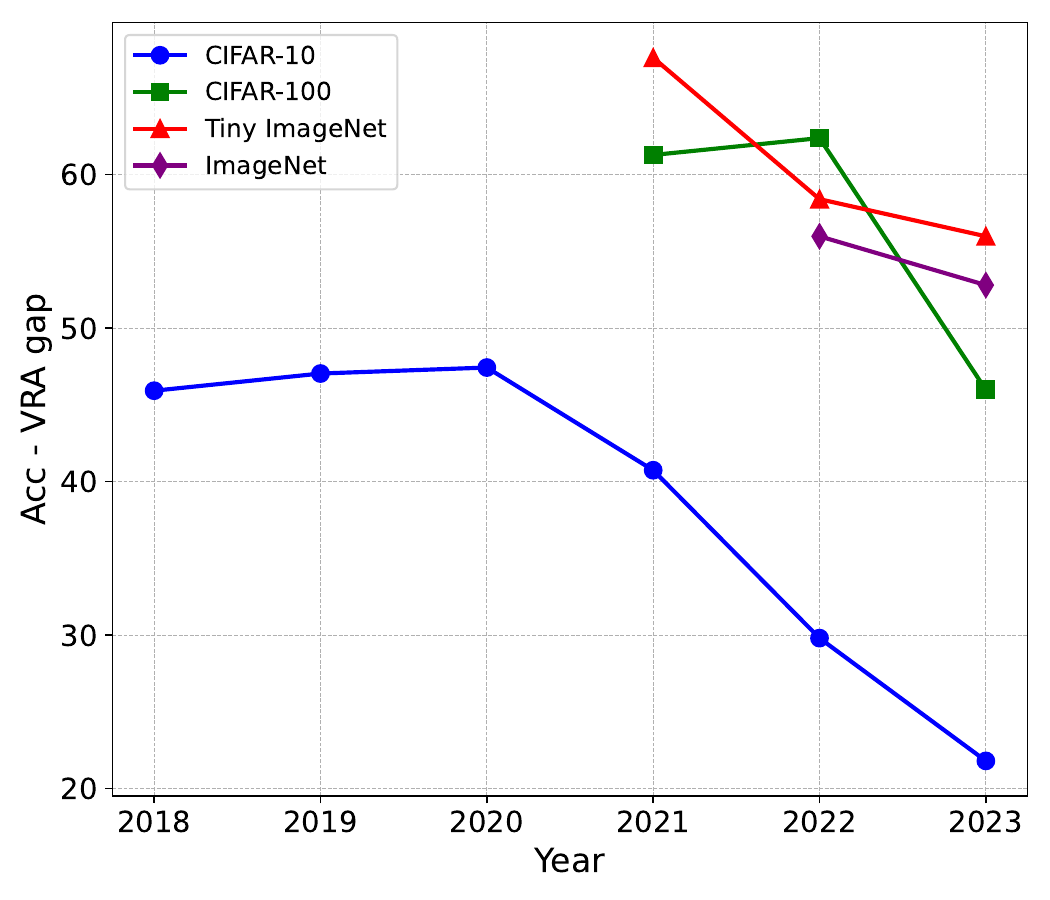}
    \caption{The gap between the reported SOTA $\empacc$ and SOTA $\empvra$ (measured locally in a $\ell_2$-ball with $\epsilon=36/255$) on several vision datasets.}
    \label{fig:acc_vs_vra}
\end{figure}

\paragraph{\textbf{A Way Forward for Robustness via Lipschitz-based Methods}}

Although the power of Lipschitz-based certification is supported theoretically, and has thus far been borne out empirically, our bullishness on this approach's future prospects deserves closer examination.
A series of studies find, either empirically or theoretically, that robust models require more  capacity~\cite{schmidt2018adversarially, bubeck2021law, li2023clean, wang2023better, altstidl2023raising}, particularly in the context of certification.
However, for certifiable models, especially those certified using Lipschitz-based methods, increasing the model capacity is not trivial.

To obtain satisfactory certification performance using Lipschitz-based training methods, it is crucial to design network architectures with easy-to-compute and tight Lipschitz bounds.
This is a challenge in and of itself, but it also specifically complicates capacity-scaling.

One common approach to adding capacity is by adding depth to the network, but this is challenging for Lipschitz-based methods, as the bound for the entire network is typically computed as a product of the layer-wise Lipschitz bounds, meaning that any looseness compounds multiplicatively with each layer.
Furthermore, deep models typically use residual architectures, but a naive bound on the Lipschitz constant of residual blocks is unfortunately loose~\cite{hu2023scaling}.
\citet{hu2023scaling} proposed using a linear residual branch so that the Lipschitz bound for the residual block can be tight, showing that this method can obtain non-trivial deterministic certification on ImageNet. However, linear residual blocks are less expressive than non-linear residual blocks.

The requirement for Lipschitzness, also renders traditional attention-based architectures as out-of-scope since they do not have a Lipschitz bound.
Although some work~\cite{kim2021lipschitz, xucertifiably} proposes to modify the attention mechanism to make it Lipschitz, so far attention-based mechanisms have not been shown to outperform basic ResNet architectures~\cite{hu2023scaling}.
Nonetheless, exploring transformer-like building blocks with tight Lipschitz bounds remains an interesting research direction.
Models using such blocks may benefit from the high capacity from Transformer-like deign, and bridge the gap between $\vra$ and standard accuracy ($\acc$) for visual models.
Additionally, ``Lipschitz transformers'' could open the door for providing certain security guarantees for LLMs, which have recently gained massive attention from the machine learning community as well as the general public. 

\section{\textbf{Conclusion}}
Though a tremendous amount of research has been produced over the last decade on adversarial robustness with respect to norm-bounded adversaries, the impact of this research in practice remains unclear.
In light of this situation, we revisit some basic underlying motivations driving this research and argue for theo position that robustness research with respect to norm-bounded adversaries, particularly research on certified defenses, continues to be practically worthwhile and technically challenging. We clarify that certification can mean different things at different stages of learning pipeline and formalize the guarantees granted by certification at these different stages. With this, we hope to bring clarity and structure to the ongoing, almost decade-long, conversation on certified robustness, and provide insight as to which directions remain fruitful. We also clarify that, under realistic distributional assumptions, there is no inherent trade-off between accuracy, robustness, and efficient certifiability, suggesting that certified models stand a hopeful chance of being able to satisfy the many requirements that would be necessary for adoption into real-world applications.
Finally, to this end, we propose specifically that certified training incorporating Lipschitz-based certification may offer the most promising path towards learning an ideal function that is accurate, robust, and certifiable.

\nocite{langley00}

\bibliography{references}
\bibliographystyle{icml2023}

\newpage
\appendix



\end{document}